\title{\LARGE \bf
PROD: Palpative Reconstruction of Deformable Objects through Elastostatic Signed Distance Functions
}
\author{Hamza~El-Kebir$^{1}$
\thanks{$^{1}$H. El-Kebir is with the Beckman Institute of Advanced Science and Technology, University of Illinois Urbana-Champaign, Urbana,
IL 61801, USA
{\tt\small elkebir2@illinois.edu}. Corresponding author.}%
}
\newtheorem{theorem}{Theorem}
\newtheorem{lemma}{Lemma}
\newtheorem{proposition}{Proposition}
\theoremstyle{definition}
\newtheorem{definition}{Definition}
\newtheorem{assumption}{Assumption}
\theoremstyle{remark}
\newtheorem{remark}{Remark}
\newtheorem{corollary}{Corollary}
\newenvironment{proofsketch}{\noindent\textit{Proof Sketch.}}{}
\newcommand{\dd}{\mathrm{d}}
\newcommand{\inv}{^{-1}}
\newcommand{\dhat}[1]{\ThisStyle{\setbox0=\hbox{$\SavedStyle#1$}%
  \stackengine{0pt}{\SavedStyle#1}{\SavedStyle\hspace{.2\ht0}%
  \hat{\vphantom{#1}}\kern\dimexpr2.2\LMpt+.7pt\relax\hat{\vphantom{#1}}}{O}{c}{F}{T}{L}}%
}
\newcommand{\dcheck}[1]{\ThisStyle{\setbox0=\hbox{$\SavedStyle#1$}%
  \stackengine{0pt}{\SavedStyle#1}{\SavedStyle\hspace{.2\ht0}%
  \check{\vphantom{#1}}\kern\dimexpr2.2\LMpt+.7pt\relax\check{\vphantom{#1}}}{O}{c}{F}{T}{L}}%
}
\newcommand{\hatcheck}[1]{\ThisStyle{\setbox0=\hbox{$\SavedStyle#1$}%
  \stackengine{0pt}{\SavedStyle#1}{\SavedStyle\hspace{.2\ht0}%
  \hat{\vphantom{#1}}\kern\dimexpr2.2\LMpt+.7pt\relax\check{\vphantom{#1}}}{O}{c}{F}{T}{L}}%
}
\newcommand{\checkhat}[1]{\ThisStyle{\setbox0=\hbox{$\SavedStyle#1$}%
  \stackengine{0pt}{\SavedStyle#1}{\SavedStyle\hspace{.2\ht0}%
  \check{\vphantom{#1}}\kern\dimexpr2.2\LMpt+.7pt\relax\hat{\vphantom{#1}}}{O}{c}{F}{T}{L}}%
}
\newcommand{\hslashslash}{%
  \raisebox{.9ex}{%
    \scalebox{.7}{%
      \rotatebox[origin=c]{0}{$-$}%
    }%
  }%
}
\newcommand{\deltaslash}{%
  {%
   \vphantom{d}%
   \ooalign{\kern.05em\smash{\hslashslash}\hidewidth\cr$\delta$\cr}%
   \kern.05em
  }%
}
\newcommand{\sdf}{\phi} 
\newcommand{\sdfzero}{\phi_0} 
\newcommand{\force}{P} 
\newcommand{\youngs}{E} 
\newcommand{\poisson}{\nu} 
\newcommand{\estar}{{E^*}} 
\newcommand{\radius}{r} 
\newcommand{\point}{p} 
\newcommand{\normal}{q} 
\newcommand{\disp}{v} 
\newcommand{\indentation}{\delta} 
\newcommand{\curvature}{\kappa} 
\newcommand{\laplacian}{\Delta} 
\newcommand{\grad}{\nabla} 
\newcommand{\divg}{\nabla \cdot} 
\newcommand{\norm}[1]{\Vert #1 \Vert} 
\begin{document}

\maketitle
\thispagestyle{empty}
\pagestyle{empty}

\begin{abstract}
We introduce PROD (Palpative Reconstruction of Deformables), a novel method for reconstructing the shape and mechanical properties of deformable objects using elastostatic signed distance functions (SDFs). Unlike traditional approaches that rely on purely geometric or visual data, PROD integrates palpative interaction---measured through force-controlled surface probing---to estimate both the static and dynamic response of soft materials. We model the deformation of an object as an elastostatic process and derive a governing Poisson equation for estimating its SDF from a sparse set of pose and force measurements. By incorporating steady-state elastodynamic assumptions, we show that the undeformed SDF can be recovered from deformed observations with provable convergence. Our approach also enables the estimation of material stiffness by analyzing displacement responses to varying force inputs. We demonstrate the robustness of PROD in handling pose errors, non-normal force application, and curvature errors in simulated soft body interactions. These capabilities make PROD a powerful tool for reconstructing deformable objects in applications ranging from robotic manipulation to medical imaging and haptic feedback systems.
\end{abstract}


%
\IEEEpeerreviewmaketitle

\section{Introduction}

The ability to accurately reconstruct the geometry and mechanical properties of deformable objects is crucial for applications in robotics, medical imaging, and haptic interaction. Traditional approaches to reconstruction predominantly rely on visual scanning or contact-based surface tracing, which focus on external shape recovery but fail to capture the intrinsic material properties that govern an object's response to force. In contrast, biological systems leverage \emph{palpation}---a process that integrates touch, pressure, and force-induced deformations---to infer both geometric and mechanical characteristics of an object in real time.

Palpation has long been utilized in medical diagnostics to assess tissue stiffness and detect abnormalities. Techniques such as dynamic mechanical palpation enable the measurement of viscoelastic properties in soft tissues, providing quantitative insights into pathological conditions~\cite{Bouffandeau2025}. Force feedback as a result of varying tissue properties prove to be key in determining the efficacy of surgical incision techniques such as electrocautery \cite{El-Kebir2021d}, which bears significant interest in effectuating minimally invasive (micro)surgery \cite{El-Kebir2023c}. Additionally, studies on force modulation strategies have demonstrated the efficacy of active probing mechanisms in identifying local variations in stiffness within soft tissues~\cite{Palacio-Torralba2015, Konstantinova2017}. While these methods successfully extract material properties, they often lack integration with comprehensive shape reconstruction frameworks, leaving an opportunity for approaches that simultaneously recover geometry and elasticity.

Recent advances in 3D reconstruction have demonstrated the effectiveness of signed distance functions (SDFs) in capturing intricate geometries. SDF-based methods have been widely adopted for reconstructing rigid bodies due to their ability to encode surface boundaries implicitly while maintaining a continuous representation of object topology. For instance, Chen \emph{et al.}~\cite{Chen2023} introduced gSDF, a geometry-driven approach that reconstructs 3D hand-object interactions from monocular RGB images, while SDF-SRN~\cite{Lin2020} proposed a method for learning SDF representations from single-view images. Although these methods excel in modeling complex geometries, they do not account for the deformability of soft bodies, where shape varies in response to force.

Soft body reconstruction presents unique challenges due to the nonlinear and time-dependent nature of material deformations. While SDFs have primarily been applied to rigid objects, recent work has attempted to extend them to soft bodies. Xu \emph{et al.}~\cite{Xu2024} introduced GSurf, a framework that learns SDFs directly from Gaussian primitives to address issues such as incomplete reconstructions and fragmented surfaces, and Liu \emph{et al.}~\cite{Liu2022a} proposed ReDSDF, a regularized deep SDF approach that computes smooth distance fields at arbitrary scales to improve the representation of high-dimensional manifolds. Despite these advances, existing methods focus predominantly on shape recovery without explicitly modeling the underlying mechanical properties or force-driven deformations of the material. 


An alternative to model-driven approaches is the use of vision-based tactile sensors to infer shape and mechanical properties. GelSight~\cite{Yuan2015, Yuan2017} employs an elastomeric surface and high-resolution imaging to measure contact forces and surface deformations, enabling detailed reconstruction of object topography. Similarly, Insight~\cite{Sun2022} provides real-time 3D haptic sensing, capturing force directionality and surface compliance. While such external sensing systems offer rich tactile feedback, they rely on high-resolution imaging hardware and calibration procedures, limiting their adaptability in scenarios where direct interaction with soft objects is required. These systems also do not inherently model the dynamic response of a deformable body under continuous palpation, and do not strive to reconstruct the underformed shape of the body.

Intrinsic reconstruction methods that rely directly on palpative interactions offer an alternative to externally instrumented techniques. By actively probing an object and measuring its force response, it is possible to infer both its undeformed shape and material properties. However, prior work in this area has primarily focused on estimating local stiffness variations~\cite{Richey2023}, rather than reconstructing a full 3D model that captures both geometry and elastodynamic behavior. A key limitation of existing palpation-based methods is their reliance on empirical stiffness estimation, without integrating a formal mathematical model that accounts for the time-dependent deformation response.

To address these challenges, we introduce \emph{PROD} (Palpative Reconstruction of Deformables), a novel framework for reconstructing both the geometry and mechanical properties of deformable objects using elastodynamic signed distance functions. Unlike conventional SDF-based methods that assume rigid geometry, PROD models soft materials as elastodynamic systems governed by Newtonian mechanics and non-Hertzian contact mechanics. By applying controlled probing forces and measuring surface deformations, as illustrated in Figure~\ref{fig:flat punch}, our method reconstructs the signed distance field of the object while simultaneously estimating its material properties. In particular, the key contributions of our work are:
\begin{enumerate}
	\item \textbf{Elastostatic SDF Reconstruction}: We derive a governing Poisson equation that estimates the signed distance function from sparse pose data obtained through palpative interactions.
	\item \textbf{Recovery of Undeformed Geometry}: By leveraging steady-state elastodynamic assumptions as well as a combination of Hertzian and non-Hertzian contact theory, we develop an inversion method that recovers the undeformed SDF from deformed observations.
	\item \textbf{Estimation of Material Properties}: We propose a force-displacement analysis that enables the extraction of an object's Young's modulus and curvature, offering insights into its stiffness and compliance through a process of multiple probes.
	\item \textbf{Applications to Soft Robotics, Non-Destructive Testing, and Biomechanics}: We discuss potential uses of PROD in robotic grasping, medical robotics, non-destructive testing, and haptic feedback systems.
\end{enumerate}

By integrating force-based palpative exploration with computational elastostatics and contact mechanics, PROD provides a physics-informed approach to reconstructing soft objects that extends beyond purely geometric methods. In the following Sections, we formalize our approach, present theoretical guarantees for convergence, and validate our method on simulated deformable objects.

\section{Preliminaries}

We first consider the application of elastodynamic equations on signed distance functions, as defined in this Section.

\subsection{Signed Distance Functions}

\begin{figure}[t]
	\centering
	\includegraphics[width=0.5\linewidth]{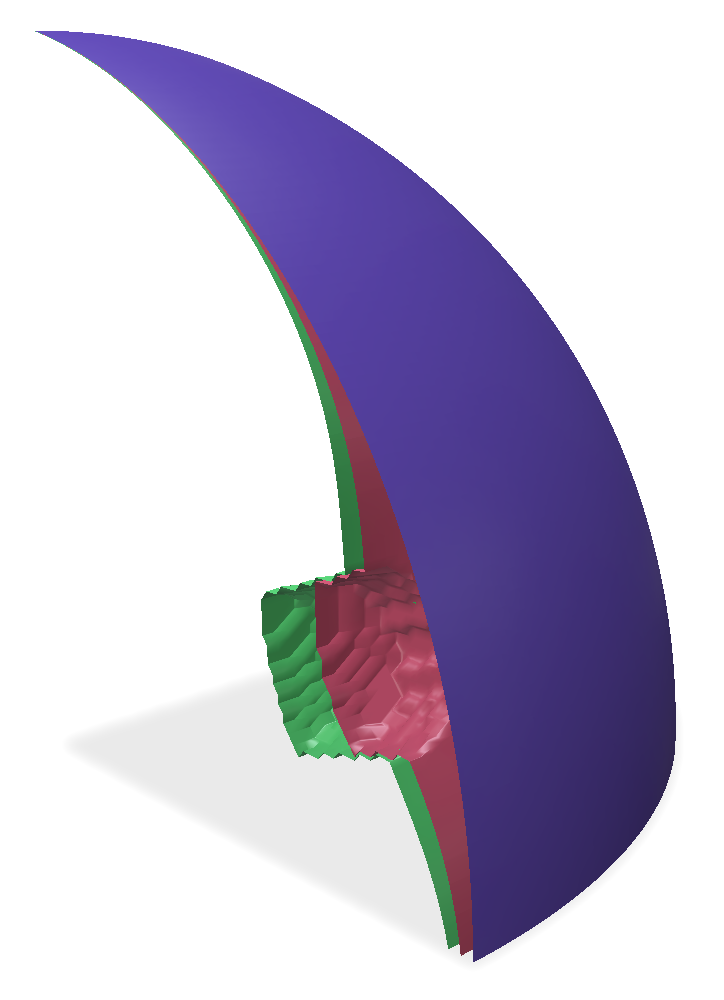}
	\caption{An illustration of the deformation of a spherical object under flat punch action. The undeformed surface is given in blue, whereas the red and green surface show deformation with increased pressure.}
	\label{fig:flat punch}
\end{figure}

Signed distance functions (SDFs) are defined as functions $\phi : \Omega \subseteq \mathbb{R}^n \to \mathbb{R}$ that satisfy a particular form of the \emph{eikonal equation} \cite[p.~93]{Evans2010}. The general eikonal equation is defined as \cite[p.~416]{Evans2010}
\begin{equation}\label{eq:general eikonal equation}
	\Vert \nabla \phi(x) \Vert = \frac{1}{\psi(x)},\quad x \in \Omega,
\end{equation}
where $\Omega$ is an open set in $\mathbb{R}^n$, and
$\psi : \mathbb{R}^n \to \mathbb{R}_+$ is a positively valued function. In physical terms, the solution $\phi(x)$ to this nonlinear partial differential equation can be interpreted as the shortest time needed to travel from the boundary $\partial\Omega$ to $x \in \Omega$, with $\psi(x)$ giving the
speed at $x$.

For a signed distance function, the speed will be unity throughout
the domain, and $x$ will not be constrained to lie in a subset of $\mathbb{R}^n$ such that $\phi : \mathbb{R}^n \to \mathbb{R}$ satisfying \eqref{eq:general eikonal equation} with $\psi$ $\Omega = \mathbb{R}^n$. Let $\Omega := \{\phi(x) \leq 0 : x \in \mathbb{R}^n \}$. Then, given that $\psi(x) \equiv 1$, the time to travel from the boundary $\partial\Omega$ to any $x \in \mathbb{R}^n$ will simply be equal to the shortest distance from $x$ to $\partial\Omega$. This distance is signed, meaning that for $x$ with $\phi(x) < 0$ will be in the interior of $\Omega$, $\phi(x) = 0$ implies that $x \in \partial\Omega$, and $\phi(x) > 0$ implies that $x \not\in \Omega$. We formally defined the signed distance function as follows:

\begin{definition}[Signed Distance Function (SDF)]\label{def:SDF}
	A function $\phi : \mathbb{R}^n \to \mathbb{R}$ is a \emph{signed distance function} of a set $\Omega$ if it satisfies:
	\begin{enumerate}[noitemsep]
	\item Eikonal equation:
	\begin{equation}\label{eq:eikonal equation}
		\Vert \nabla \phi(x) \Vert = 1,\quad x \in \mathbb{R}^n,
		\tag{EE}
	\end{equation}
	\item Zero distance at the boundary:
	\begin{equation}\label{eq:boundary zero}
		\phi(x) = 0,\quad \forall \ x \in \partial\Omega,
		\tag{BZ}
	\end{equation}
	\item Inward-facing normal at the boundary:
	\begin{equation}\label{eq:inward normal}
		\forall x \in \mathbb{R}^n \setminus \partial\Omega, \exists\epsilon > 0, \ \mathrm{s.t.} \ \phi\left( x + \epsilon \nabla\phi(x) \right) < \phi(x).
		\tag{IN}
	\end{equation}
\end{enumerate}
\end{definition}

\subsection{Elastodynamic Deformation}

Assuming a homogeneous and isotropic material with linear elasticity and a known Young's modulus $E$ and a density $\rho$, the elastodynamic equation under a force field $f : \Omega_0 \times \mathbb{R}_+ \to \mathbb{R}_+$ reads
\begin{equation}\label{eq:elastodynamic equation}
	\rho \frac{\partial^2 v}{\partial t^2} = \nabla \cdot \sigma + f(x, t), \quad \sigma = E \varepsilon,
\end{equation}
where $v(x, t)$ is the time varying \emph{deformation field} and $\varepsilon$ is the strain tensor. We proceed by developing a similar elastodynamic equation that applies directly to a signed distance field.

\section{Reconstruction of Elastodynamic SDFs}

In light of the elastodynamic deformation equation \eqref{eq:elastodynamic equation}, as well as the Definition~\ref{def:SDF}, we directly obtain the following elastodynamic equation that governs the signed distance function:
\begin{equation}
	\rho \frac{\partial^2 \phi}{\partial t^2} + \gamma \frac{\partial \phi}{\partial t} - E \Delta\phi = u(t) w_r (x - p(t)) \langle q(t), \nabla \phi \rangle,
\end{equation}
where $\phi = \phi_0 + \delta\phi$, with $\phi_0$ denoting the undeformed state. Let $w_r$ be a weighting function parameterized by a radius $r$, such that $\int_{\Omega_0} w_r(x) dx = \pi r^2$. This is commonly known as a \emph{flat punch} model, wherein a cylinder is pressed with its flat side into a surface in the normal direction. This is illustrated in Figure~\ref{fig:flat punch}.

In a steady-state case where $(u, p, q)$ are constant, we obtain the following elastostatic equation after expanding $\phi$:
\begin{equation}
\begin{split}
	E\Delta (\delta\phi) &= -u w_r (x - p) \langle q, \nabla \phi_0 + \nabla\delta\phi \rangle - E \Delta \phi_0, \\
	\frac{\partial \delta\phi}{\partial n} &= -1 \ \mathrm{on} \ \partial\Omega_0.\\
\end{split}
\end{equation}

We now consider whether it is prudent to assume steady-state conditions in a biomedically relevant scenario.

\subsection{Correctness of Steady-State Assumptions}

We proceed by approximating the settling time of a linearized elastodynamic system under small perturbations. We obtain the following PDE:
\begin{equation*}
	\rho \frac{\partial^2 \delta\phi}{\partial t^2} + \gamma \frac{\partial \delta\phi}{\partial t} - E \Delta \delta\phi = 0.
\end{equation*}
Performing a modal decomposition in the Laplace domain, with solution $\delta\phi(x, t) = \hat{\phi}(x) \exp(s t)$, we obtain:
\begin{equation*}
	\rho s^2 \hat{\phi} + \gamma s \hat{\phi} - E \Delta \hat{\phi} = 0.
\end{equation*}
Dividing by $\hat{\phi}$, and taking $\lambda$ to be the eigenvalues of the Laplacian $-\Delta$, we have
\begin{equation*}
	\rho s^2 + \gamma s + E\lambda = 0.
\end{equation*}
Given this dispersion relation, we find the (elastic) disturbance propagation speed to be $c \approx \sqrt{E/\rho}$ assuming that $\gamma$ is small relative to $E$ and $\rho$. Let $\ell$ be a characteristic length scale for the deformable object under consideration, for instance its thickness. We define $T_e := \ell/c$ to be the time it takes for an elastic wave to propagate a distance the length of the characteristic length. If the contact time, or forcing time, $T_{c}$ is such that $T_c \gg T_e$, then we may safely ignore dynamic terms and assume attainment of a steady-state condition within a period $T_c$.


We verify this time scale separation assumptions on human adipose tissue, which has a density of $\rho = 960$ kg/m\textsuperscript{3} and a Young's modulus $E$ between 4.48--11.50 kPa \cite{Wenderott2020}. Assuming that the contact time is at least $T_c = 100$ ms and we have a characteristic length scale of $\ell = 5$ mm, we find $T_e \geq 2.31$ ms is such that $T_c \gg T_e$ is satisfied.


\section{Unique SDF Estimation from Pose Data}\label{sec:SDF estimation}

Let $\Omega \subseteq \mathbb{R}^n$ be an unknown \emph{continuum}, or compact, closed, and path connected set. We assume that we are given $N$ \emph{poses} $Z_N := \{z_i\}_{i=1}^N$, where $z_i = (p_i, q_i) \in \mathbb{R}^6$ such that each $p_i \in \mathbb{R}^3$ is a position on $\partial \Omega$ and $q_i \in \mathcal{B}_1^3$ is the inward surface normal of $\Omega$ at $p_i$. We seek to reconstruct the unknown signed distance function $\phi : \mathbb{R}^n \to \mathbb{R}$ of $\Omega$ on the basis of these $N$ poses.

\begin{theorem}\label{thm:pseudo-SDF}
	Let $\Omega \subseteq \mathbb{R}^n$ be an unknown \emph{continuum}, or compact, closed, and path connected set. Assume that $N$ \emph{poses} $Z_N := \{z_i\}_{i=1}^N$ are given, where $z_i = (p_i, q_i) \in \mathbb{R}^6$ such that each $p_i \in \mathbb{R}^3$ is a position on $\partial \Omega$ and $q_i \in \mathcal{B}_1^3$ is the inward surface normal of $\Omega$ at $p_i$.
	
	Let $\hat{q} : \mathbb{R}^n \to \mathcal{B}_1^n$ be the interpolated normal field such that $\hat{q}(p_i) = q_i$. Then, the solution to the Poisson equation
	\begin{equation}\label{eq:Poisson reconstruction of SDF}
		\Delta \phi = \nabla \cdot \hat{q}.
	\end{equation}
	yields the a \emph{pseudo-signed distance function} of $\partial\Omega$ satisfying \eqref{eq:boundary zero} and \eqref{eq:inward normal}.
\end{theorem}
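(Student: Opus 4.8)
The plan is to read \eqref{eq:Poisson reconstruction of SDF} not as an arbitrary Poisson problem but as the divergence of the relaxed identity $\grad\phi = \hat{q}$. A genuine signed distance function satisfies \eqref{eq:eikonal equation} together with $\grad\phi = \hat{q}$ along $\partial\Omega$, so applying $\divg$ to $\grad\phi = \hat{q}$ and discarding the unit-norm constraint \eqref{eq:eikonal equation} yields exactly $\Delta\phi = \divg\hat{q}$. Equivalently, I would present \eqref{eq:Poisson reconstruction of SDF} as the Euler--Lagrange equation of the least-squares functional $J[\phi] = \int \norm{\grad\phi - \hat{q}}^2\,\dd x$, whose first variation $\delta J = -2\int (\Delta\phi - \divg\hat{q})\,\delta\phi\,\dd x + 2\int_{\partial\Omega} \langle \grad\phi - \hat{q}, n\rangle\,\delta\phi\,\dd S$ vanishes precisely when $\Delta\phi = \divg\hat{q}$ holds in the interior with the natural condition $\langle \grad\phi - \hat{q}, n\rangle = 0$ on the boundary. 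I would then characterise the solution through the Helmholtz--Hodge decomposition $\hat{q} = \grad\phi + w$ with $\divg w = 0$, identifying $\grad\phi$ as the curl-free, $L^2$-closest conservative field to the prescribed normal field. This is the precise sense in which $\phi$ recovers the normal data while silently dropping \eqref{eq:eikonal equation}, which is the origin of the qualifier \emph{pseudo}.

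First I would establish \eqref{eq:inward normal}. Because $\hat{q}$ interpolates the inward normals, $\hat{q}(p_i) = q_i$, and because to leading order the true normal field near $\partial\Omega$ is itself a gradient (of the exact distance), the solenoidal remainder $w$ is higher order in a tubular neighbourhood of $\partial\Omega$; hence $\grad\phi$ is co-oriented with $\hat{q}$ there, i.e.\ $\langle \grad\phi, \hat{q}\rangle > 0$. A first-order expansion $\phi(x + \epsilon\grad\phi(x)) = \phi(x) + \epsilon\norm{\grad\phi(x)}^2 + O(\epsilon^2)$ then pins down the monotone behaviour of $\phi$ along its gradient, which, after matching the sign convention of \eqref{eq:inward normal} (interior negative, inward normal $\hat{q}$), yields the claimed orientation property. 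The bookkeeping between the outward gradient of the standard distance function and the inward normal $q_i$ is where a sign must be tracked carefully.

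Next I would obtain \eqref{eq:boundary zero}. The harmonic ambiguity in solutions of \eqref{eq:Poisson reconstruction of SDF} is removed by a normalisation; the natural one is the Dirichlet condition $\phi = 0$ on $\partial\Omega$, which makes \eqref{eq:boundary zero} immediate. The substantive statement, however, is that this level set is \emph{forced}: since $\hat{q}$ is a (sub-)unit field whose divergence concentrates as an approximate layer on $\partial\Omega$ and whose orientation is globally consistent, the closedness and path-connectedness of $\Omega$ rule out spurious interior zero crossings, so $\{\phi = 0\}$ coincides with $\partial\Omega$ rather than with a nearby level set. I would prove this by working in the tubular neighbourhood where $\hat{q}$ is $C^0$-close to the exact SDF gradient and invoking the implicit function theorem to track the zero set.

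I expect the main obstacle to be exactly this pinning of the zero level set, i.e.\ the rigorous proof of \eqref{eq:boundary zero}. The Poisson problem \eqref{eq:Poisson reconstruction of SDF} is only well posed once boundary or decay data are fixed, and the interpolant $\hat{q}$ is constrained only at the $N$ sample points with $\norm{\hat{q}} \le 1$ rather than $\norm{\hat{q}} = 1$. Controlling the deviation of $\grad\phi$ from the true unit normal field away from the samples---particularly near the medial axis, where the conservative and solenoidal parts compete---is the delicate point, and it is precisely what prevents \eqref{eq:eikonal equation} from holding and what the word \emph{pseudo} is meant to absorb.
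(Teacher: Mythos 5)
Your core derivation is the same as the paper's: both obtain \eqref{eq:Poisson reconstruction of SDF} as the Euler--Lagrange equation of the least-squares functional $\int \norm{\grad\phi - \hat{q}}^2\,\dd x$. The differences are in the periphery. The paper restricts to variations $\delta\phi$ vanishing on $\partial\Omega$ and takes $\phi = 0$ there, so both boundary terms in the integration by parts disappear and only the interior equation survives; in other words, \eqref{eq:boundary zero} is \emph{assumed} as a constraint of the variational class, not derived. You instead allow free variations, which correctly produces the natural boundary condition $\langle \grad\phi - \hat{q}, n\rangle = 0$, and you then impose the Dirichlet datum $\phi = 0$ separately as a normalisation --- note these two choices are in slight tension (with Dirichlet data imposed, the natural condition never arises; you should pick one variational class and stay with it), though this does not affect the interior equation. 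Beyond that, your Helmholtz--Hodge framing ($\hat{q} = \grad\phi + w$ with $\divg w = 0$, identifying $\grad\phi$ as the $L^2$-closest conservative field) and your tubular-neighbourhood arguments for \eqref{eq:inward normal} and for pinning the zero level set are material the paper does not contain at all: its proof stops at the Euler--Lagrange equation and never explicitly verifies \eqref{eq:boundary zero} or \eqref{eq:inward normal}. What the paper's route buys is brevity, at the cost of circularity on \eqref{eq:boundary zero}; what yours buys is an honest account of where the real difficulties sit --- the orientation argument and the zero-set pinning --- though both of those remain sketches in your write-up (the claim that the solenoidal part $w$ is ``higher order'' near $\partial\Omega$ is asserted, not proved, and it is exactly the kind of statement that degenerates near the medial axis, as you yourself note).
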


\begin{proof}
Let $K := \mathrm{convex}(\{p_i\}_{i=1}^N)$ be the convex hull of the given positions, and let $\phi_0 (x) := \Vert x - \mathrm{center}(K) \Vert - \mathrm{diam}(K)/2$ be a circumscribing sphere that encloses $K$. This will be our initial guess for the SDF $\phi$.

Let $\hat{q} : K \to \mathcal{B}_1^n$ be the interpolated normal field on $K$ such that $\hat{q}(p_i) = q_i$. Our problem reduces to an energy minimization problem of the following objective functional:
\begin{equation}
	F_0 (\phi) := \int_K \Vert \nabla\phi(x) - \hat{q}(x) \Vert^2 \Vert dx.
\end{equation}

Expanding the functional $F_0$, we obtain:
\begin{equation}
	F_0 (\phi) = \int_K \left( \Vert \nabla \phi(x) \Vert^2 - 2 \nabla \phi (x) \cdot \hat{q}(x) + \Vert \hat{q}(x) \Vert^2 \right) dx.
\end{equation}
The term $\Vert \hat{q}(x) \Vert^2$ can be dropped since it does not depend on $\phi$. Furthermore, we can shrink $K$ to the unknown $\Omega$ without loss of generality. 
Hence, let the new energy functional be $F (\phi) := \int_\Omega \left( \Vert \nabla \phi (x) \Vert^2 - 2 \nabla \phi (x) \cdot \hat{q}(x) \right) dx$.

Computing the first variation of $F$, we obtain
\begin{equation*}
\begin{split}
	\delta F &= \left.\frac{d}{d\epsilon} F(\phi + \epsilon \delta \phi) \right|_{\epsilon = 0} \\
	&= \int_K \frac{d}{d\epsilon} \left( \Vert \nabla (\phi + \epsilon \delta\phi) \Vert^2 - 2 \nabla (\phi + \epsilon \delta\phi) \cdot \hat{q} \right) dx. \\
	&= \int_K \left( 2 \nabla \phi \cdot \nabla(\delta \phi) - 2 \hat{q} \cdot \nabla(\delta\phi) \right) dx.
\end{split}
\end{equation*}

Next, we apply integration by parts to obtain
\begin{equation*}
\begin{split}
	\int_\Omega \nabla \phi \cdot \nabla(\delta \phi) dV &= \oint_{\partial\Omega} \phi \ \delta\phi \hat{n} dS - \int_{\Omega} \Delta\phi \ \delta\phi dV \\
	&= - \int_{\Omega} \Delta\phi \ \delta\phi dV,
\end{split}
\end{equation*}
since $\phi$ is zero along $\partial\Omega$. Similarly, we have
\begin{equation*}
\begin{split}
	\int_\Omega \hat{q} \cdot \nabla(\delta\phi) dV &= \oint_{\partial\Omega} \hat{q} \cdot \delta\phi \cdot \hat{n} dS - \int_{\Omega} (\nabla \cdot \hat{q} ) \delta\phi dV \\
	&= - \int_{\Omega} (\nabla \cdot \hat{q} ) \delta\phi dV,
\end{split}
\end{equation*}
since $\delta\phi$ is zero along the surface $\partial\Omega$. We can now construct the following Euler-Lagrange equation by setting $\delta F = 0$ for all variations $\delta\phi$ gives $-2\Delta\phi + 2\nabla\cdot\hat{q} = 0$, yielding   Poisson equation \eqref{eq:Poisson reconstruction of SDF}.
\end{proof}

\begin{remark}
	The Poisson equation \eqref{eq:Poisson reconstruction of SDF} arises directly when enforcing that $\nabla \phi$ aligns with normal field $\hat{q}$. The Poisson equation ensures that the Laplacian of the SDF matches the divergence of the normal field. If $\hat{q}$ is a valid normal field of some surface, the solution of this \eqref{eq:Poisson reconstruction of SDF} will recover an SDF up to some additive constant.
\end{remark}

To resolve the additive constant, we enforce Dirichlet boundary conditions $\phi(p_i) = 0$ for $i=1,\ldots,N$, yielding the following PDE to uniquely reconstruct $\phi$ from $Z_N$, which we denote by $\phi_N$:
\begin{equation}\label{eq:Poisson PDE for unique SDF}
	\Delta \phi_N = \nabla \cdot \hat{q}, \quad \phi_N(p_i) = 0, \ i=1,\ldots,N.
\end{equation}

Let us now verify if the eikonal equation \eqref{eq:general eikonal equation} holds for $\psi \equiv 0$. Since the solution to Poisson equation \eqref{eq:Poisson PDE for unique SDF} minimizes the energy functional $E(\phi) = \int_{\Omega} \Vert \nabla \phi - \hat{q} \Vert^2 dx$, we ensure that $\nabla \phi$ \emph{aligns} with $\hat{q}$. Since $\hat{q}$ is normalized, we have $\Vert \hat{q}(x) \Vert = 1$ for all $x \in \mathbb{R}^n$. Hence $\nabla\phi \approx \hat{q}$ implies that $\Vert \nabla \phi \Vert \approx 1$.

Since \emph{gradient alignment}, and not \emph{gradient magnitude}, is enforced in the Poisson equation, we note that the correct gradient magnitude is obtained at the zero-level set $\{ x \in \mathbb{R}^n : \phi(x) = 0 \}$ given the Dirichlet boundary conditions $\phi(p_i) = 0$. Additionally, this boundary condition is essential for uniqueness, since in its absence $\phi + c$ is also a correct solution for any constant $c$. We proceed by showing how we can turn the pseudo-SDF obtained in Theorem~\ref{thm:pseudo-SDF} can be transformed into an equivalent true SDF that satisfies \eqref{eq:eikonal equation}.

\subsection{Global Satisfaction of the Eikonal Condition}

We introduce the following iterative evolution step to enforce the eikonal equation \eqref{eq:eikonal equation} across the entire domain:
\begin{equation}\label{eq:reinitialization PDE}
    \partial_t \phi + \operatorname{sign}(\phi_0)(\Vert\nabla \phi\Vert - 1) = 0.
\end{equation}
When initialized using a pseudo-SDF $\phi_0$, this PDE will produce a signed distance function that satisfies \eqref{eq:eikonal equation}. At each iteration, the equation is a Hamilton--Jacobi PDE, which can be efficiently solved using upwind schemes.

Since \eqref{eq:reinitialization PDE} is a time-evolving PDE, we propose an alternative iterative update scheme that preserves the zero-level set:

\begin{theorem}
    Let $\hat{\phi}$ be a pseudo-signed distance function such that $\{ x \in \mathbb{R}^n : \phi(x) = 0 \} = \partial\Omega$ of some volume $\Omega$. Suppose that \eqref{eq:eikonal equation} is not satisfied for all $x \in \mathbb{R}^n$. Then, Algorithm~\ref{algo:SDF reinitialization} produces a true signed distance function that satisfies Definition~\ref{def:SDF} within an error $\epsilon > 0$ and preserves the zero-level set of $\hat{\phi}$:
    \begin{algorithm}
    \KwData{$\epsilon > 0$, $\Delta t > 0$, $\hat{\phi}$}
    \KwResult{$\bar{\phi}$}
    $k \gets 0$\;
    $\phi^{(0)} \gets \hat{\phi}$\;
    \While{$\max_{x} |\Vert \nabla \phi^{(k)}(x) \Vert - 1| \geq \epsilon$}{
        Solve $\phi^{(k+1)} \gets \phi^{(k)} - \Delta t \cdot \operatorname{sign}(\hat{\phi}) (\Vert\nabla \phi^{(k)}\Vert - 1)$\;
        $k \gets k+1$\;
    }
    $\bar{\phi} \gets \phi^{(k)}$\;
    \caption{Reinitialization of a pseudo-SDF into a zero-level set preserving true SDF.}
    \label{algo:SDF reinitialization}
    \end{algorithm}
\end{theorem}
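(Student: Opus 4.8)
The plan is to treat Algorithm~\ref{algo:SDF reinitialization} as an explicit forward-Euler discretization of the Hamilton--Jacobi reinitialization PDE~\eqref{eq:reinitialization PDE}, with the sign frozen at the initial iterate $\hat{\phi}$, and to split the claim into its two independent parts: (i) preservation of the zero-level set, and (ii) convergence of the iterates to a genuine signed distance function within the tolerance $\epsilon$.

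First I would dispatch zero-level-set preservation, which is the easy direction. For every $x \in \partial\Omega$ we have $\hat{\phi}(x) = 0$, hence $\operatorname{sign}(\hat{\phi}(x)) = 0$, so the update in the while loop reduces to $\phi^{(k+1)}(x) = \phi^{(k)}(x)$ pointwise on $\partial\Omega$. Since $\phi^{(0)} = \hat{\phi}$ vanishes on $\partial\Omega$ by hypothesis, an immediate induction gives $\phi^{(k)}|_{\partial\Omega} \equiv 0$ for all $k$, and in particular $\bar{\phi}|_{\partial\Omega} \equiv 0$, which yields condition~\eqref{eq:boundary zero} and freezes the boundary. Because the frozen sign $\operatorname{sign}(\hat{\phi})$ likewise fixes, for all $k$, the partition of $\mathbb{R}^n$ into the interior region $\{\hat{\phi} < 0\}$ and exterior region $\{\hat{\phi} > 0\}$ --- the update never alters the sign structure inherited from $\hat{\phi}$, so no spurious zero crossings are created --- the orientation condition~\eqref{eq:inward normal} is inherited from $\hat{\phi}$ as well, and $\{\bar{\phi} = 0\} = \partial\Omega$.

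The substance is part (ii). I would rewrite the evolution in quasilinear transport form,
\[
    \partial_t \phi + \operatorname{sign}(\hat{\phi}) \frac{\nabla\phi}{\Vert \nabla\phi \Vert} \cdot \nabla\phi = \operatorname{sign}(\hat{\phi}),
\]
exhibiting it as advection of the residual along characteristics that emanate from $\partial\Omega$ and travel outward (into $\{\hat{\phi} > 0\}$) and inward (into $\{\hat{\phi} < 0\}$) at unit speed. The distance function of $\partial\Omega$, taken with the sign of $\hat{\phi}$, is the unique stationary viscosity solution of $\operatorname{sign}(\hat{\phi})(\Vert \nabla\phi \Vert - 1) = 0$ subject to $\phi|_{\partial\Omega} = 0$; I would invoke standard Hamilton--Jacobi viscosity-solution theory (Crandall--Lions) to identify it as such and to argue that the correctly signed distance information, pinned to zero at the interface by part (i), restores $\Vert \nabla\phi \Vert = 1$ exactly on a band whose width grows linearly in $t$. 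Consequently, after a continuous time $T$ of order $\sup_{x} \operatorname{dist}(x, \partial\Omega)$ over the bounded region of interest, the residual $\Vert \nabla\phi \Vert - 1$ is driven to zero everywhere, so $\max_{x} |\Vert \nabla\phi^{(k)} \Vert - 1|$ falls below $\epsilon$ in finitely many steps; the terminating iterate $\bar{\phi}$ then satisfies~\eqref{eq:eikonal equation} up to $\epsilon$, which together with parts (i) completes Definition~\ref{def:SDF}.

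The main obstacle is making the convergence in part (ii) rigorous at the discrete level, where two issues must be controlled. A CFL-type restriction relating $\Delta t$ to the spatial discretization is needed for the explicit upwind scheme to be monotone and stable, so that the Barles--Souganidis framework guarantees convergence of the discrete iterates to the viscosity solution; and the discontinuity of $\operatorname{sign}(\hat{\phi})$ across $\partial\Omega$ must be handled (for instance via a smoothed sign or a one-sided stencil) to prevent spurious motion of the interface and to keep the scheme consistent in the cells straddling the zero-level set. Establishing strict monotone decay of $\max_{x} |\Vert \nabla\phi^{(k)} \Vert - 1|$, rather than mere asymptotic convergence, is the delicate quantitative step; I would obtain it from the finite propagation speed of the characteristics, which ensures each step corrects the residual on a fresh layer adjacent to the already-corrected band without disturbing it, so that the loop guard is provably driven below any prescribed $\epsilon > 0$.
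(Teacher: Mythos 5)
Your proposal is correct, and on the zero-level-set preservation half it is essentially the paper's own argument: the paper likewise observes that the sign factor vanishes on $\partial\Omega$ so the update is pointwise zero there, and then recasts \eqref{eq:reinitialization PDE} in level-set advection form to argue the interface is stationary (your frozen-sign induction on $\operatorname{sign}(\hat{\phi})$ is in fact more faithful to Algorithm~\ref{algo:SDF reinitialization} as written, since the algorithm uses $\operatorname{sign}(\hat{\phi})$, not $\operatorname{sign}(\phi^{(k)})$). Where you genuinely diverge is your part (ii): the paper's proof stops at preservation and simply asserts that the algorithm ``correctly reinitializes $\hat{\phi}$ into a true signed distance function'' --- it contains no termination or convergence argument at all, even though the theorem claims the output satisfies \eqref{eq:eikonal equation} within error $\epsilon$. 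Your sketch supplies exactly that missing content: identifying the signed distance to $\partial\Omega$ as the stationary viscosity solution, using unit-speed characteristics emanating from the interface to get a corrected band of width growing linearly in time, and invoking the CFL/monotone-scheme (Barles--Souganidis) machinery to transfer this to the discrete iterates; you also correctly flag the genuinely delicate points (the discontinuity of $\operatorname{sign}(\hat{\phi})$ across the interface, and strict decay of the loop guard $\max_x \bigl| \Vert \nabla\phi^{(k)} \Vert - 1 \bigr|$ rather than mere asymptotic convergence). One caveat applies equally to your part (i) and to the paper: showing $\phi^{(k)}$ vanishes on $\partial\Omega$ only gives $\partial\Omega \subseteq \{\bar{\phi} = 0\}$; neither argument by itself rules out new zeros appearing away from $\partial\Omega$ at intermediate iterations, since the update can change the sign of $\phi^{(k)}$ at points where $\hat{\phi} \neq 0$. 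Exact equality of the zero-level sets really follows only once the iterates are close to the limiting signed distance function --- i.e., it is a consequence of the convergence argument you sketch, not of the preservation argument alone.
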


\begin{proof}
    We show that the evolution equation
    \begin{equation*}
        \partial_t \phi + \operatorname{sign}(\phi_0)(\Vert\nabla \phi\Vert - 1) = 0
    \end{equation*}
    ensures that:
    \begin{equation*}
		\phi^{(0)}(x) = \hat{\phi}(x) = 0 \ \forall x \in \partial\Omega \ \Rightarrow \ \phi^{(k)}(x) = 0 \ \forall x \in \partial\Omega, \ \forall k.
	\end{equation*}
    The proof follows by stationarity of the zero-level set under the evolution PDE. Since \eqref{eq:reinitialization PDE} only modifies $\phi$ based on $\Vert\nabla \phi\Vert - 1$, it does not change the values of $\phi$ at $\phi = 0$, because at the zero-level set, $\operatorname{sign}(\phi(x)) = 0$ and remains unchanged throughout the process.

    More formally, for any fixed level set $\phi = c$, the reinitialization PDE can be rewritten in level-set advection form:
    \begin{equation*}
        \partial_t \phi + \mathbf{v} \cdot \nabla \phi = 0, \quad \text{where } \mathbf{v} = -\operatorname{sign}(\phi) (\Vert\nabla \phi\Vert - 1) \frac{\nabla \phi}{\Vert\nabla \phi\Vert}.
    \end{equation*}
    Since the velocity $\mathbf{v}$ is always directed \emph{along the normal to the level set}, and is zero whenever $\|\nabla \phi\| = 1$, the zero-level set itself remains stationary during evolution. This ensures that:
    \begin{equation*}
        \{x : \phi^{(0)}(x) = 0\} = \{x : \phi^{(k)}(x) = 0\}, \quad \forall k.
    \end{equation*}
    Thus, Algorithm~\ref{algo:SDF reinitialization} correctly reinitializes $\hat{\phi}$ into a true signed distance function while preserving its zero-level set. This completes the proof.
\end{proof}

\section{Convergence of SDF Estimate}

We would now like to show that the zero level set of $\phi_N$, constructed from $Z_N$ using the approach from Section~\ref{sec:SDF estimation}, converges to that of $\phi$ for $N \to \infty$. We do so by considering the Hausdorff distances between these two level sets:
\begin{equation*}
\begin{split}
	&d_N := d_{\mathrm{H}} (\phi_N\inv(0), \phi\inv(0)) = d_{\mathrm{H}} (\partial\Omega_N, \partial\Omega) \\
	&= d_{\mathrm{H}} (\{x : \phi_N (x) = 0 \}, \{x : \phi (x) = 0 \}) \\
	&= \max\left\{ \sup_{x \in \phi_N\inv (0)} \inf_{y \in \phi\inv(0)} \Vert x - y \Vert, \sup_{x \in \phi\inv (0)} \inf_{y \in \phi_N\inv(0)} \Vert x - y \Vert \right\}.
\end{split}
\end{equation*}
We would like to show that $\lim_{N \to \infty} d_N = 0$.

Before we continue, we make the following lax assumption about $Z_N$:
\begin{assumption}\label{assump:convergence of SDF estimate}
	We assume that $Z_N$ is self-consistent for all $N$, i.e., all samples $z_i = (p_i, q_i) \in Z_N$ lie on a continuous manifold $\partial\Omega$ with Lipschitz constant $L$. Furthermore, we assume that for $N \to \infty$, the set $\{p_i\}_{i=1}^N$ becomes dense on $\partial\Omega$ and that the interpolated normal field $\hat{q}_N$ converges uniformly to the true normal field $q$. Finally, we assume that $q_N, q \in (L^2(\Omega_0))^n$ and $\partial\Omega \in C^{1,1}(\Omega_0)$.
\end{assumption}

\begin{proposition}
	Let $Z_N$ be a set of $N$ poses on a Lipschitz continuous manifold $\partial\Omega$ that satisfies Assumption~\ref{assump:convergence of SDF estimate}, and let $\phi_N$ be the signed distance function constructed from $Z_N$ based by solving \eqref{eq:Poisson PDE for unique SDF} and regularizing it through Algorithm~\ref{algo:SDF reinitialization}. Let $\phi$ be the true SDF that defines $\partial\Omega$.
	
	Then, for $N \to \infty$, the following statements hold
	\begin{enumerate}
		\item $\lim_{N \to \infty} \Vert \phi_N - \phi \Vert_{H^2(\Omega_0)} = 0$,
		\item $\lim_{N \to \infty} d_N = \lim_{N \to \infty} d_{\mathrm{H}} (\phi_N\inv(0), \phi\inv(0)) = 0$,
		\item $d_{\mathrm{H}} (\Omega_N, \Omega) = d_{\mathrm{H}} (\{x : \phi_N (x) \leq 0 \}, \{x : \phi (x) \leq 0 \}) = 0$.
	\end{enumerate}
\end{proposition}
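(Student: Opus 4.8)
The plan is to establish claim (1) first and then derive (2) and (3) from it, since the Hausdorff convergences follow from an $H^2$ bound once one exploits the non-degeneracy $\Vert\nabla\phi\Vert = 1$ of a true SDF. The starting observation is that the true SDF itself solves the reconstruction PDE: the energy functional in Theorem~\ref{thm:pseudo-SDF} aligns $\nabla\phi$ with the unit normal field, so for the exact solution $\nabla\phi = q$ and hence $\Delta\phi = \nabla\cdot\nabla\phi = \nabla\cdot q$. Thus $\phi$ and the reconstruction $\phi_N$ solve the same equation \eqref{eq:Poisson PDE for unique SDF} with right-hand sides $\nabla\cdot q$ and $\nabla\cdot\hat{q}_N$, so the error $e_N := \phi_N - \phi$ satisfies $\Delta e_N = \nabla\cdot(\hat{q}_N - q)$ with boundary data differing only through the discrete Dirichlet conditions at the $p_i$.

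For (1) I would pass to the weak form of this error equation, test against $e_N$ after subtracting a lift that absorbs the boundary mismatch, and integrate by parts on the divergence term to get the coercive estimate $\Vert\nabla e_N\Vert_{L^2(\Omega_0)} \le \Vert\hat{q}_N - q\Vert_{L^2(\Omega_0)} + (\text{boundary terms})$. By Assumption~\ref{assump:convergence of SDF estimate} the uniform convergence $\hat{q}_N \to q$ forces $\Vert\hat{q}_N - q\Vert_{L^2(\Omega_0)}\to 0$, while density of $\{p_i\}$ on $\partial\Omega$ together with the $C^{1,1}$ regularity makes the boundary mismatch vanish; a Poincar\'e inequality then upgrades this to $\Vert e_N\Vert_{H^1(\Omega_0)}\to 0$. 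To reach $H^2$ I would invoke elliptic ($H^2$) regularity on the $C^{1,1}$ domain $\Omega_0$, bounding $\Vert e_N\Vert_{H^2(\Omega_0)}$ by $C(\Vert\Delta e_N\Vert_{L^2(\Omega_0)} + \Vert e_N\Vert_{L^2(\Omega_0)})$. The reinitialization of Algorithm~\ref{algo:SDF reinitialization} enters only favorably: it drives $\phi_N$ toward the eikonal condition that $\phi$ satisfies exactly while, by the zero-level-set preservation established for Algorithm~\ref{algo:SDF reinitialization}, fixing the zero set, so it does not affect the $N\to\infty$ limit of $e_N$.

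Given (1), claims (2) and (3) follow analytically. In the relevant dimensions $n \le 3$ the embedding $H^2(\Omega_0)\hookrightarrow C^0(\overline{\Omega_0})$ turns the $H^2$ bound into uniform convergence $\Vert\phi_N - \phi\Vert_{\infty}\to 0$. Because $\phi$ is a true SDF, $\Vert\nabla\phi\Vert = 1$ away from the medial axis, in particular on a tubular neighborhood of $\phi\inv(0)$, so the zero level set is transversal and a standard implicit-function argument gives $d_N = d_{\mathrm{H}}(\phi_N\inv(0),\phi\inv(0)) \le C\Vert\phi_N - \phi\Vert_{\infty}$, proving (2). The identical transversality argument applied to the sublevel sets $\{\phi_N \le 0\}$ and $\{\phi \le 0\}$ yields $d_{\mathrm{H}}(\Omega_N,\Omega) \le C\Vert\phi_N - \phi\Vert_{\infty} \to 0$, establishing (3), read as the limit statement $\lim_{N\to\infty} d_{\mathrm{H}}(\Omega_N,\Omega) = 0$.

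The main obstacle is the step from $H^1$ to $H^2$ convergence. Uniform convergence of $\hat{q}_N$ only controls $\nabla\cdot(\hat{q}_N - q)$ in $H^{-1}$, whereas the $H^2$ estimate needs it in $L^2$; crossing this gap requires that the interpolated field have a convergent divergence, i.e. that the discrete mean-curvature information carried by $\hat{q}_N$ converge to that of $\partial\Omega$. This is exactly where the $C^{1,1}$ hypothesis must be used, since bounded second fundamental form gives $\Delta\phi = \nabla\cdot q \in L^{\infty}(\Omega_0) \subset L^2(\Omega_0)$, together with a consistency assumption on the interpolation scheme generating $\hat{q}_N$. I would spend most of the effort there; the companion nuisance is quantifying the discrete-versus-continuous boundary mismatch in terms of the fill distance of $\{p_i\}$ and the Lipschitz constant $L$.
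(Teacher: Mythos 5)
Your route is essentially the paper's own: statement (1) is obtained from elliptic theory for \eqref{eq:Poisson PDE for unique SDF} together with the convergence of the normal field in Assumption~\ref{assump:convergence of SDF estimate}, and the Hausdorff statements (2) and (3) are then deduced from the resulting uniform convergence plus the non-degeneracy of a true SDF near its zero set. The differences are in execution, and they mostly favor you. Where the paper cites the Babu\v{s}ka--Lax--Milgram theorem and asserts the continuous-dependence bound $\Vert \phi_N - \phi \Vert_{H^2(\Omega_0)} \leq C \Vert q_N - q \Vert_{L^2(\Omega_0)}$ in one line, you write the error equation $\Delta e_N = \nabla\cdot(\hat q_N - q)$ explicitly and thereby expose the step the paper glosses over: testing against $e_N$ only controls $\Vert \nabla e_N \Vert_{L^2}$ by $\Vert \hat q_N - q \Vert_{L^2}$ (an $H^{-1}$-type bound on the right-hand side), whereas the $H^2$ estimate needs $\nabla\cdot(\hat q_N - q)$ itself in $L^2$, i.e.\ convergence of the curvature information carried by the interpolated normals --- something uniform convergence of $\hat q_N$ alone does not supply. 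Your plan to close this via the $C^{1,1}$ hypothesis plus a consistency condition on the interpolation scheme is exactly what a complete argument requires; the paper's stated bound silently presumes it. For (2) and (3), your Sobolev embedding $H^2(\Omega_0) \hookrightarrow C^0(\overline{\Omega_0})$ (valid for $n \leq 3$) followed by a transversality bound $d_{\mathrm{H}} \leq C\Vert \phi_N - \phi\Vert_\infty$ is the quantitative version of the paper's $\epsilon$--$\delta$ argument, which instead uses directly that $|\phi(x)| = d(x,\partial\Omega)$ for a true SDF; both rest on the same mechanism, and the paper, like you, leaves the embedding step implicit. One caution: your implicit-function bound also needs $\Vert\nabla\phi_N\Vert$ bounded away from zero near its zero set, which is precisely where the reinitialization of Algorithm~\ref{algo:SDF reinitialization} (and the paper's closing remark that both $\phi_N$ and $\phi$ satisfy \eqref{eq:eikonal equation}) does real work, rather than being the harmless afterthought your proposal treats it as.
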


\begin{proof}
Since \eqref{eq:Poisson PDE for unique SDF} is a second-order elliptic PDE, we find that $\phi_N, \phi \in H^2(\Omega_0) \cap H_0^1 (\Omega_0)$, where $H_0^1 (\Omega_0)$ is a subpsace of the first-order Sobolev space $H^1 (\Omega_0) = \{ u \in L^2(\Omega_0) : \nabla u \in (L^2 (\Omega_0))^n \}$. By elliptic regularity theory, in particular the Babu\v{s}ka--Lax--Milgram theorem \cite[\S6.2.1, p.~315]{Evans2010}, we find existence and uniqueness of \eqref{eq:Poisson PDE for unique SDF} are satisfied, as well as the fact that the solution depends continuously on $q$ and $q_N$. Furthermore, by Assumption~\ref{assump:convergence of SDF estimate}, given that $\phi_N, \phi \in H^2(\Omega_0)$, it follows that this continuous dependence extends to the gradient of $q$ and $q_N$, yielding:
\begin{equation*}
	\Vert \phi_N - \phi \Vert_{H^2 (\Omega_0)} \leq C \Vert q_N - q \Vert_{L^2(\Omega_0)},
\end{equation*}
for some constant $C$. Hence, given uniform convergence of $\lim_{N \to \infty} \Vert q_N - q \Vert_\infty = 0$, we find that $\lim_{N \to \infty} \Vert \phi_N - \phi \Vert_{H^2 (\Omega_0)} = 0$. Finally, given the Lipschitz continuity of $\phi$, we can show that the zero level sets $\phi_N\inv (0)$ converges to $\phi\inv (0)$ in the Hausdorff sense using an $\epsilon$--$\delta$ argument.

For any $\epsilon > 0$, there exists $\delta > 0$ such that
\begin{equation*}
	|\phi(x)| < \delta/2 \ \Rightarrow \ d(x, \partial\Omega) < \epsilon/2.
\end{equation*}

Furthermore, there exists $N$ such that $|\phi(x) - \phi_N(x)| < \delta/2$ by uniform convergence, yielding
\begin{equation*}
	|\phi_N(x)| \leq |\phi(x) - \phi_N(x)| + |\phi(x)| < \delta \ \Rightarrow \ d(x, \partial\Omega_N) < \epsilon/2.
\end{equation*}

Finally, we have that
\begin{equation*}
	|\phi(x)|, |\phi_N (x)|, |\phi(x) - \phi_N (x)| < \delta/2
\end{equation*}
implies
\begin{equation*}
	d_{\mathrm{H}} (\partial\Omega, \partial\Omega_N) \leq d(x, \partial\Omega) + d(x, \partial\Omega_N) < \epsilon,
\end{equation*}
which follows by the triangle inequality. Hence, we have shown that $d_{\mathrm{H}} (\phi_N\inv(0), \phi\inv(0))$ converges to zero for $N \to \infty$. If $\phi_N$ and $\phi$ both satisfy the eikonal equation \eqref{eq:eikonal equation}, then it follows that $\lim_{N \to \infty} d_{\mathrm{H}} (\Omega, \Omega_N) = 0$. This completes the proof.
\end{proof}

We now consider the problem of uniquely determining undeformed SDFs from deformed pose data.

\section{Unique Undeformed SDF Estimation from Deformed Pose Data}

We consider the following assumptions:

\begin{assumption}[Material Properties]\label{assump:material properties}
We assume the object is composed of a homogeneous and isotropic linear elastic material with Young's modulus $\youngs$ and Poisson's ratio $\poisson$. We define the plane strain modulus as $\estar = \youngs / (1 - \poisson^2)$. All deformations are assumed to be purely elastic. In its unloaded state, the object's shape is described by a signed distance function (SDF) $\sdfzero$.
\end{assumption}

\begin{assumption}[Force Application and Contact]\label{assump:force application}
A total normal force $\force_i$ is applied to the object at a point $\point_i$ on its surface, directed along the surface normal $\normal_i$. The force is distributed over a circular area of radius $\radius$, modelled as contact with a rigid, flat-ended cylindrical punch of radius $\radius$. We assume no tangential forces (frictionless contact). After deformation, the object's shape is described by the deformed SDF $\sdf_i$. We assume the contact results in the application point being on the deformed surface, i.e., $\sdf_i(\point_i) = 0$. Furthermore, we assume that the material point corresponding to $\point_i$ was initially inside the undeformed object, i.e., $\sdfzero(\point_i) < 0$.
\end{assumption}


Let us now treat the case of zero curvature $\kappa$.

\begin{proposition}[Undeformed SDF Estimate: Flat Case]\label{prop:flat-case reconstruction}
Consider the loading conditions of Assumption~\ref{assump:force application} applied to an initially \textit{flat} elastic half-space ($\curvature = 0$). Let $\point_i$ be the center of the contact area after force $\force_i$ is applied. The value of the undeformed SDF $\sdfzero$ evaluated at $\point_i$ is approximately related to the flat-case indentation depth $\indentation_{\mathrm{flat}, i}$:
\begin{equation*}
    \sdfzero(\point_i) \approx -\indentation_{\mathrm{flat}, i} = -\frac{\force_i (1 - \poisson^2)}{2 \youngs \radius} = -\frac{\force_i}{2 \estar \radius}.
\end{equation*}
\end{proposition}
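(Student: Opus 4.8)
The plan is to split the statement into two independent pieces and then combine them: a purely kinematic claim relating the value of $\sdfzero$ at the deformed contact center to the rigid-body indentation depth, and the classical contact-mechanics computation of that depth for a flat-ended cylindrical punch. The conceptual crux is the observation that a \emph{rigid} flat punch settles its entire contact patch uniformly, so the deformed contact center $\point_i$ is the image of a material point that originally lay on the flat undeformed surface, displaced inward along $\normal_i$ by a single scalar $\indentation_{\mathrm{flat}, i}$.

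First I would set up coordinates aligned with the force, taking $\normal_i$ as the unit inward normal of the undeformed half-space, so that the undeformed material occupies the half-space bounded by the plane through the original surface point, call it $x_0$. Because the punch is rigid and flat, every contact point settles by the same amount, whence $\point_i = x_0 + \indentation_{\mathrm{flat}, i}\,\normal_i$. Since by Assumption~\ref{assump:force application} the function $\sdfzero$ is the signed distance function of the undeformed (planar) boundary with interior values negative, evaluating it at $\point_i$ gives $\sdfzero(\point_i) = -\langle \normal_i, \point_i - x_0\rangle = -\indentation_{\mathrm{flat}, i}$, using $\Vert \normal_i\Vert = 1$. This is consistent with the hypothesis $\sdfzero(\point_i) < 0$. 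The ``$\approx$'' in the statement records that this identification is exact only within the linearized flat-punch model: it identifies the contact radius with the punch radius $\radius$, treats the body as a half-space, and measures displacement against the undeformed far field.

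Next I would supply the load--displacement law $\indentation_{\mathrm{flat}, i} = \force_i/(2\estar\radius)$. This is the Boussinesq--Sneddon result for a rigid flat-ended cylindrical punch of radius $\radius$ on an isotropic linear-elastic half-space: solving the associated mixed boundary-value problem (prescribed uniform normal displacement inside the contact disc, vanishing traction outside) produces the square-root-singular contact pressure and the contact stiffness $\mathrm{d}\force/\mathrm{d}\indentation = 2\estar\radius$. I would cite this as a standard result rather than re-derive the potential-theoretic solution, and then substitute the plane-strain modulus $\estar = \youngs/(1-\poisson^2)$ from Assumption~\ref{assump:material properties} to recover the two equivalent closed forms appearing in the statement.

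The main obstacle is precisely this contact-mechanics step: establishing the stiffness $2\estar\radius$ requires solving the dual integral equations for the rigid circular punch, which is nontrivial and is the reason I would invoke it as a classical result rather than prove it inline. By contrast, the geometric identification $\sdfzero(\point_i) = -\indentation_{\mathrm{flat}, i}$ is immediate once uniform settling of the rigid punch is recognized. A secondary point to handle with care is the meaning of the approximation: I would state explicitly that equality holds within the idealized model and that the ``$\approx$'' absorbs the identification of the measured contact-center pose with a material point originally lying on the flat surface, together with the small-strain assumption underlying linear elasticity.
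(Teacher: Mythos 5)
Your proposal is correct and takes essentially the same route as the paper: both invoke the classical Boussinesq--Sneddon flat-punch relation $\indentation_{\mathrm{flat},i} = \force_i/(2\estar\radius)$ as a cited result rather than deriving it, and both reduce the remaining work to the geometric identification $\sdfzero(\point_i) \approx -\indentation_{\mathrm{flat},i}$. The only difference is in that geometric step: the paper uses a first-order Taylor expansion of $\sdfzero$ about the undeformed material point (with $\sdfzero(\mathbf{x}_{0,i}) \approx 0$ and $\grad\sdfzero \approx \normal_i$), an argument it reuses in the curved case, whereas you exploit the exact linearity of the SDF of a planar half-space, which is cleaner here but specific to $\curvature = 0$.
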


\begin{proof}
Let $\point_i$ be the location on the deformed surface where the load is centered ($\sdf_i(\point_i) = 0$). Let $\mathbf{x}_{0,i}$ be the material point in the undeformed configuration that moves to $\point_i$ after deformation, such that $\point_i = \mathbf{x}_{0,i} + \disp_i(\mathbf{x}_{0,i})$, where $\disp_i$ is the displacement field.

Under the assumption of a rigid flat punch of radius $\radius$ pressing into an elastic half-space, the material surface within the contact area undergoes a uniform normal displacement $\indentation_{\mathrm{flat}, i}$. The relationship between the total force $\force_i$ and this displacement is given by standard contact mechanics results \cite{Johnson2003a} as:
\begin{equation*}
    \indentation_{\mathrm{flat}, i} = \frac{\force_i (1 - \poisson^2)}{2 \youngs \radius} = \frac{\force_i}{2 \estar \radius}.
\end{equation*}
The displacement vector at the center point $\mathbf{x}_{0,i}$ is predominantly normal to the surface. If $\normal_i$ is the outward normal at $\point_i$, the displacement vector is approximately $\disp_i(\mathbf{x}_{0,i}) \approx -\indentation_{\mathrm{flat}, i} \normal_i$.

We relate the deformed and undeformed SDFs using a first-order Taylor expansion of $\sdfzero$ around $\mathbf{x}_{0,i}$:
\begin{equation*}
    \sdfzero(\point_i) = \sdfzero(\mathbf{x}_{0,i} + \disp_i(\mathbf{x}_{0,i})) \approx \sdfzero(\mathbf{x}_{0,i}) + \grad \sdfzero(\mathbf{x}_{0,i}) \cdot \disp_i(\mathbf{x}_{0,i}).
\end{equation*}
Assuming the undeformed point $\mathbf{x}_{0,i}$ was on or very close to the original surface, $\sdfzero(\mathbf{x}_{0,i}) \approx 0$. The gradient $\grad \sdfzero(\mathbf{x}_{0,i})$ is the normal vector $\normal_{0,i}$ to the undeformed surface at $\mathbf{x}_{0,i}$. For small deformations, we approximate $\normal_{0,i} \approx \normal_i$. Substituting the displacement vector:
\begin{equation*}
    \sdfzero(\point_i) \approx 0 + \normal_i \cdot (-\indentation_{\mathrm{flat}, i} \normal_i) = -\indentation_{\mathrm{flat}, i} (\normal_i \cdot \normal_i).
\end{equation*}
Since $\normal_i$ is a unit vector, $\normal_i \cdot \normal_i = 1$. Therefore,
\begin{equation*}
    \sdfzero(\point_i) \approx -\indentation_{\mathrm{flat}, i} = -\frac{\force_i}{2 \estar \radius}.
\end{equation*}
This shows that the value of the undeformed SDF at the final contact point is approximately the negative of the indentation depth required in the flat case.
\end{proof}


We now incorporate the effect of initial surface curvature $\curvature$ using the additive approximation developed previously. This result is related to the superposition of a Boussinesq's solution to a point load \cite{Itou2020}.

\begin{proposition}[Undeformed SDF Estimate: Curved Case]\label{prop:curved-case reconstruction}
Consider the loading conditions of Assumption~\ref{assump:force application} applied to an initially curved elastic body. Let $\point_i$ be the center of the contact area after force $\force_i$ is applied. Let the initial undeformed shape near $\point_i$ have a local curvature represented by $\curvature(\mathbf{x})$. The value of the undeformed SDF $\sdfzero$ evaluated at $\point_i$ is approximately related to the total indentation depth $\indentation_{\mathrm{total}, i}$, which includes both elastic deformation and geometric flattening:
\begin{equation*}
    \sdfzero(\point_i) \approx -\indentation_{\mathrm{total}, i} \approx -\left( \indentation_{\mathrm{flat}, i} + \indentation_{\mathrm{geom}, i} \right),
\end{equation*}
where $\indentation_{\mathrm{flat}, i} = \force_i / (2 \estar \radius)$ is the flat-case elastic indentation, and $\indentation_{\mathrm{geom}, i}$ represents the average initial geometric gap over the contact area due to curvature.

Specifically:
\begin{itemize}
    \item If the curvature is approximately constant $\curvature_i$ over the contact area:
    \begin{equation*}
        \indentation_{\mathrm{geom}, i} \approx \frac{\curvature_i \radius^2}{4}, \quad \text{so} \quad \sdfzero(\point_i) \approx -\left( \frac{\force_i}{2 \estar \radius} + \frac{\curvature_i \radius^2}{4} \right).
    \end{equation*}
    \item If the curvature $\curvature(\rho, \theta)$ varies significantly over the contact area (in polar coordinates $(\rho, \theta)$ centered at $\point_i$ on the tangent plane), and approximating the initial gap as $h(\rho, \theta) \approx \curvature(\rho, \theta) \rho^2 / 2$:
    \begin{equation*}
        \indentation_{\mathrm{geom}, i} = \langle h \rangle_i = \frac{1}{\pi \radius^2} \int_{0}^{2\pi} \int_{0}^{\radius} \frac{\curvature(\rho, \theta) \rho^2}{2} \rho \, d\rho \, d\theta.
    \end{equation*}
     Then:
    \begin{equation*}
        \sdfzero(\point_i) \approx -\left( \frac{\force_i}{2 \estar \radius} + \langle h \rangle_i \right).
    \end{equation*}
\end{itemize}
\end{proposition}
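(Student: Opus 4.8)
The plan is to treat the curved case as an additive perturbation of the flat case already established in Proposition~\ref{prop:flat-case reconstruction}, exploiting the superposition principle available under the linear-elastic, small-deformation hypotheses of Assumption~\ref{assump:force application}. The physical decomposition I would use is that a rigid flat-ended punch pressed into a curved body must simultaneously (i) elastically compress the underlying half-space, which contributes exactly the flat-punch indentation $\indentation_{\mathrm{flat}, i} = \force_i/(2\estar\radius)$ of the previous proposition, and (ii) close the initial geometric gap between the curved surface and the flat punch face, which contributes an additional \emph{geometric} indentation $\indentation_{\mathrm{geom}, i}$. I would then write $\indentation_{\mathrm{total}, i} = \indentation_{\mathrm{flat}, i} + \indentation_{\mathrm{geom}, i}$ and recover the SDF value by reusing the first-order Taylor argument of Proposition~\ref{prop:flat-case reconstruction} verbatim, with the displacement of the central material point replaced by $\disp_i(\mathbf{x}_{0,i}) \approx -\indentation_{\mathrm{total}, i}\,\normal_i$.

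To pin down $\indentation_{\mathrm{geom}, i}$, I would first quantify the initial gap. Laying polar coordinates $(\rho,\theta)$ on the tangent plane to $\partial\Omega$ at the undeformed material point $\mathbf{x}_{0,i}$, the osculating-paraboloid expansion of the surface height above this plane gives $h(\rho,\theta) \approx \curvature(\rho,\theta)\,\rho^2/2$, where $\curvature(\rho,\theta)$ is the normal curvature in the direction $\theta$; this step requires a well-defined second fundamental form of the undeformed surface at $\mathbf{x}_{0,i}$ and is where the curvature $\curvature$ enters the analysis. Since the rigid flat punch imposes a single uniform normal displacement across the entire contact disk, I would argue that the scalar geometric depth it must supply is the area-average of this gap over the contact region, namely $\indentation_{\mathrm{geom}, i} = \langle h\rangle_i = (\pi\radius^2)\inv\int_0^{2\pi}\!\int_0^{\radius} h(\rho,\theta)\,\rho\,d\rho\,d\theta$, which is precisely the displayed integral for the variable-curvature case.

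It then remains to evaluate this average. In the constant-curvature case $\curvature(\rho,\theta)\equiv\curvature_i$, the elementary integral $\int_0^{2\pi}\!\int_0^{\radius}(\curvature_i\rho^2/2)\,\rho\,d\rho\,d\theta = \pi\curvature_i\radius^4/4$ divided by $\pi\radius^2$ yields $\indentation_{\mathrm{geom}, i} \approx \curvature_i\radius^2/4$, giving the stated closed form, while the variable-curvature case simply leaves the integral unevaluated. Feeding $\indentation_{\mathrm{total}, i}$ into the Taylor expansion of $\sdfzero$ about $\mathbf{x}_{0,i}$, with $\sdfzero(\mathbf{x}_{0,i})\approx 0$, $\grad\sdfzero(\mathbf{x}_{0,i})\approx\normal_i$, and $\normal_i\cdot\normal_i = 1$, produces $\sdfzero(\point_i)\approx -\indentation_{\mathrm{total}, i}$, establishing both displayed conclusions at once.

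The main obstacle is justifying the two approximations buried in the ``$\approx$'' signs, both stemming from the genuinely non-Hertzian nature of flat-punch-on-curved-body contact. First, the additive split $\indentation_{\mathrm{total}, i} = \indentation_{\mathrm{flat}, i} + \indentation_{\mathrm{geom}, i}$ decouples the elastic and geometric responses, which is only first-order valid: it treats the contact radius as fixed at $\radius$ (legitimate for a flat punch) but ignores the coupling between the curvature-induced redistribution of contact pressure and the elastic compliance. Second, and more delicately, identifying the geometric depth with the \emph{area}-average gap is itself an approximation. A rigorous accounting via Betti reciprocity against the singular flat-punch pressure distribution $p(\rho)\propto(\radius^2-\rho^2)^{-1/2}$ weights the gap toward the contact edge and would instead yield $\indentation_{\mathrm{geom}, i} = \curvature_i\radius^2/3$ in the constant-curvature case; the area-average used here replaces that edge-weighted mean by the uniform mean, trading exactness for a simple, geometry-only expression. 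I would therefore state explicitly that the result is a controlled approximation, consistent with superposing Boussinesq point-load solutions over the gap profile, valid in the regime where both the elastic indentation and the gap are small relative to $\radius$.
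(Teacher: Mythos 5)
Your proposal follows essentially the same route as the paper's proof sketch: the additive split $\delta_{\mathrm{total},i} \approx \delta_{\mathrm{flat},i} + \delta_{\mathrm{geom},i}$, the osculating-paraboloid gap $h(\rho,\theta) \approx \kappa(\rho,\theta)\rho^2/2$ on the tangent plane, the area-average of the gap over the contact disk yielding $\kappa_i r^2/4$ in the constant-curvature case, and the verbatim reuse of the flat-case Taylor argument to conclude $\sdfzero(\point_i) \approx -\delta_{\mathrm{total},i}$. Your closing observation---that weighting the gap by the singular flat-punch pressure distribution (via Betti reciprocity) would give $\kappa_i r^2/3$ instead of the area-averaged $\kappa_i r^2/4$---is a correct refinement that the paper does not address, but you present it as a caveat on the approximation rather than as an alternative proof, so the argument itself coincides with the paper's.
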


\begin{proofsketch}
The total indentation $\indentation_{\mathrm{total}, i}$ required at the center $\point_i$ must achieve two goals: (1) deform the material elastically as if it were flat, corresponding to $\indentation_{\mathrm{flat}, i}$, and (2) close the initial geometric gap $h(\mathbf{x})$ that exists between the tangent plane at $\point_i$ and the curved surface over the contact radius $\radius$.

The additive approximation assumes these contributions can be superimposed:
\begin{equation*}
    \indentation_{\mathrm{total}, i} \approx \indentation_{\mathrm{flat}, i} + \indentation_{\mathrm{geom}, i}.
\end{equation*}
The term $\indentation_{\mathrm{geom}, i}$ represents the effective depth needed to flatten the initial curvature over the contact area. A reasonable estimate for this is the average value of the initial gap $h(\mathbf{x})$ over the circular contact area $A = \pi \radius^2$:
\begin{equation*}
    \indentation_{\mathrm{geom}, i} = \langle h \rangle_i = \frac{1}{A} \iint_A h(\mathbf{x}) \, dA.
\end{equation*}
For small distances $\rho$ from the center $\point_i$ on the tangent plane, the gap due to curvature $\curvature$ is approximately $h(\rho) \approx \curvature \rho^2 / 2$.

\textit{Case 1: Constant Curvature $\curvature_i$.}
The average gap is:
\begin{equation*}
\begin{split}
	\langle h \rangle_i &= \frac{1}{\pi \radius^2} \int_0^{2\pi} \int_0^{\radius} \left( \frac{\curvature_i \rho^2}{2} \right) \rho \, d\rho \, d\theta = \frac{2\pi}{\pi \radius^2} \frac{\curvature_i}{2} \int_0^{\radius} \rho^3 \, d\rho \\
	&= \frac{\curvature_i}{\radius^2} \left[ \frac{\rho^4}{4} \right]_0^{\radius} = \frac{\curvature_i \radius^2}{4}.
\end{split}
\end{equation*}

\textit{Case 2: Varying Curvature $\curvature(\rho, \theta)$.}
Assuming $h(\rho, \theta) \approx \curvature(\rho, \theta) \rho^2 / 2$, the average gap becomes the integral expression stated in the Proposition.

Using the same Taylor expansion argument as in the proof of Proposition~\ref{prop:flat-case reconstruction}, but recognizing that the total required normal displacement is now $\indentation_{\mathrm{total}, i}$, we arrive at:
\begin{equation*}
    \sdfzero(\point_i) \approx -\indentation_{\mathrm{total}, i} \approx -(\indentation_{\mathrm{flat}, i} + \indentation_{\mathrm{geom}, i}).
\end{equation*}
Substituting the expressions for $\indentation_{\mathrm{flat}, i}$ and $\indentation_{\mathrm{geom}, i}$ gives the results stated.
\end{proofsketch}

\section{Young's Modulus Determination}

We can estimate the material's Young's modulus by applying incremental forces at approximately the same location and measuring the resulting change in indentation depth. This method relies on the insight that the geometric contribution to indentation, due to initial curvature, cancels out when considering the difference between two indentation states.

\begin{proposition}[Young's Modulus Estimation]\label{prop:youngs_modulus}
Let the conditions of Assumption~\ref{assump:force application} hold for two consecutive probes, $i$ and $i+1$, at approximately the same location. Let the applied forces be $\force_i$ and $\force_{i+1}$, with $\force_{i+1} > \force_i$. Let the corresponding contact points be $\point_i$ and $\point_{i+1}$, and assume the surface normal $\normal$ remains approximately constant ($\normal_i \approx \normal_{i+1} \approx \normal$). Assume the Poisson's ratio $\poisson$ is known.

If the displacement of the probe between the two steps, $\Delta \point = \point_{i+1} - \point_i$, occurs purely along the normal direction $\normal$, then the magnitude of this displacement corresponds to the change in indentation depth, $\Delta\indentation = \norm{\Delta \point}$. The plane strain modulus $\estar$ can be estimated as:
\begin{equation*}
    \estar \approx \frac{\force_{i+1} - \force_i}{2 \radius \Delta\indentation} = \frac{\force_{i+1} - \force_i}{2 \radius \norm{\point_{i+1} - \point_i}}.
\end{equation*}
Consequently, the Young's modulus $\youngs$ is estimated as:
\begin{equation*}
    \youngs = \estar (1 - \poisson^2) \approx \frac{(\force_{i+1} - \force_i)(1 - \poisson^2)}{2 \radius \norm{\point_{i+1} - \point_i}}.
\end{equation*}
\end{proposition}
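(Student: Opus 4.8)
The plan is to build on the curved-case reconstruction of Proposition~\ref{prop:curved-case reconstruction} and exploit the cancellation of the geometric gap under differencing, as anticipated in the preamble to the statement. First I would apply that proposition to each of the two probe states, writing
\begin{equation*}
    \sdfzero(\point_i) \approx -\left( \indentation_{\mathrm{flat},i} + \indentation_{\mathrm{geom},i} \right), \qquad \sdfzero(\point_{i+1}) \approx -\left( \indentation_{\mathrm{flat},i+1} + \indentation_{\mathrm{geom},i+1} \right),
\end{equation*}
where $\indentation_{\mathrm{flat},i} = \force_i/(2\estar\radius)$ is the flat elastic indentation and $\indentation_{\mathrm{geom},i} = \langle h\rangle_i$ is the average initial geometric gap over the contact disk of radius $\radius$.

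Next I would subtract the two reconstruction relations. Because both probes act at approximately the same location with the same punch radius $\radius$, the local curvature field and the contact disk over which $\langle h \rangle_i$ is averaged are, to leading order, identical between the two steps, so $\indentation_{\mathrm{geom},i+1} \approx \indentation_{\mathrm{geom},i}$. The geometric contributions therefore cancel, leaving the purely elastic difference
\begin{equation*}
    \indentation_{\mathrm{total},i+1} - \indentation_{\mathrm{total},i} \approx \indentation_{\mathrm{flat},i+1} - \indentation_{\mathrm{flat},i} = \frac{\force_{i+1} - \force_i}{2\estar\radius}.
\end{equation*}
This cancellation is the crux of the argument and the step I expect to require the most care, since it hinges precisely on the hypothesis that the surface normal $\normal$ and the underlying geometry are unchanged between the two consecutive probes.

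I would then connect the left-hand side to the measurable probe displacement. Using the eikonal property $\norm{\grad\sdfzero}=1$ from Definition~\ref{def:SDF} together with the same first-order Taylor argument as in the proof of Proposition~\ref{prop:flat-case reconstruction}, advancing the probe tip a distance $\norm{\Delta\point}$ purely along the normal $\normal$ changes the undeformed SDF value by exactly that amount; hence $\indentation_{\mathrm{total},i+1} - \indentation_{\mathrm{total},i} = \norm{\point_{i+1} - \point_i} = \Delta\indentation$. Equating this with the elastic difference above gives the linear relation $\Delta\indentation \approx (\force_{i+1} - \force_i)/(2\estar\radius)$.

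Finally, solving for the plane strain modulus yields $\estar \approx (\force_{i+1} - \force_i)/(2\radius\,\Delta\indentation)$ with $\Delta\indentation = \norm{\point_{i+1} - \point_i}$, and substituting the plane-strain definition $\youngs = \estar(1 - \poisson^2)$ from Assumption~\ref{assump:material properties} produces the stated estimate for $\youngs$. The remaining manipulations are algebraic rearrangements of the already-established contact relation, so the only genuinely delicate point is the geometric cancellation and its reliance on the ``approximately the same location'' hypothesis.
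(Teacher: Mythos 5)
Your proposal is correct and follows essentially the same route as the paper's proof: both invoke the additive curved-case indentation model of Proposition~\ref{prop:curved-case reconstruction}, cancel the geometric gap term by differencing the two probe states, identify the change in indentation with $\norm{\point_{i+1}-\point_i}$ under the purely-normal-motion hypothesis, and rearrange using $\estar = \youngs/(1-\poisson^2)$. Your phrasing of the differencing step through the SDF values $\sdfzero(\point_i)$, $\sdfzero(\point_{i+1})$ together with the eikonal property is merely a slightly more explicit justification of the displacement--indentation identification that the paper asserts directly.
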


\begin{proof}
We use the additive approximation for the total indentation depth from Proposition~\ref{prop:curved-case reconstruction}. Let $\indentation_j$ be the total indentation depth under force $\force_j$ at point $\point_j$. Assuming the curvature $\curvature$ and punch radius $\radius$ are constant for both small incremental steps at approximately the same location:
\begin{equation*}
    \indentation_i \approx \frac{\force_i}{2 \estar \radius} + \indentation_{\mathrm{geom}}, \quad \indentation_{i+1} \approx \frac{\force_{i+1}}{2 \estar \radius} + \indentation_{\mathrm{geom}},
\end{equation*}
where $\indentation_{\mathrm{geom}}$ is the geometric indentation term (e.g., $\curvature \radius^2 / 4$ for constant curvature).

The change in indentation depth, $\Delta\indentation$, between state $i$ and $i+1$ is:
\begin{equation*}
    \Delta\indentation = \indentation_{i+1} - \indentation_i \approx = \frac{\force_{i+1} - \force_i}{2 \estar \radius}.
\end{equation*}
Note that the geometric term $\indentation_{\mathrm{geom}}$ cancels out.

Under the crucial assumption that the probe displacement $\Delta \point = \point_{i+1} - \point_i$ occurs purely along the normal direction $\normal$ (i.e., $\Delta \point$ is parallel to $\normal$), the magnitude of this displacement is precisely the change in indentation depth $\Delta\indentation = \norm{\point_{i+1} - \point_i}$.

More generally, if the motion is not purely normal, $\Delta\indentation = |\langle \point_{i+1} - \point_i, \normal \rangle|$. Assuming pure normal motion simplifies to $\Delta\indentation = \norm{\point_{i+1} - \point_i}$.

Substituting this into the expression for $\Delta\indentation$:
\begin{equation*}
    \norm{\point_{i+1} - \point_i} \approx \frac{\force_{i+1} - \force_i}{2 \estar \radius}.
\end{equation*}
Rearranging to solve for $\estar$:
\begin{equation*}
    \estar \approx \frac{\force_{i+1} - \force_i}{2 \radius \norm{\point_{i+1} - \point_i}}.
\end{equation*}
Finally, using the definition $\estar = \youngs / (1 - \poisson^2)$, we find the estimate for Young's modulus:
\begin{equation*}
    \youngs \approx \frac{(\force_{i+1} - \force_i)(1 - \poisson^2)}{2 \radius \norm{\point_{i+1} - \point_i}}.
\end{equation*}
\end{proof}

\begin{remark}
The accuracy of this method hinges on the assumption that the probe displacement $\norm{\point_{i+1} - \point_i}$ accurately reflects the change in normal indentation $\Delta\indentation$. Any tangential motion or significant change in the surface normal between probes introduces error.
\end{remark}


\section{Estimation of Young's Modulus and Curvature from Compliance Variation}

While the two-point method (Proposition~\ref{prop:youngs_modulus}) estimates Young's modulus by assuming a locally linear force-displacement relationship (constant incremental compliance), it cannot determine curvature as the geometric indentation term cancels out. However, if the incremental compliance itself changes with the load level, this variation may contain information about the underlying curvature, particularly if the contact behavior transitions between regimes.

We propose a method based on the hypothesis that contact behaves like Hertzian contact at a contact radius $a < \radius$ at low forces, where the model assumes a sphere of radius $R=1/\curvature$ on a flat surface. From this low-force regime, we transition towards the flat-punch model (contact radius fixed at $a=\radius$) at higher forces \cite[Ch.~5]{Johnson2003a}:

\begin{itemize}
    \item \textbf{Hertzian Regime ($a < \radius$):} Indentation $\indentation \approx C \force^{2/3}$, where $C = (9 \curvature / (16 \estar^2))^{1/3}$. The incremental compliance is $m(\force) = \dd\indentation/\dd\force \approx C (2/3) \force^{-1/3}$.
    \item \textbf{Flat Punch Regime ($a = \radius$):} Additive model $\indentation \approx \force / (2 \estar \radius) + \indentation_{\mathrm{geom}}$. The incremental compliance is $m = \dd\indentation/\dd\force \approx 1 / (2 \estar \radius)$, which is constant.
\end{itemize}
The transition force is approximately $\force_{\mathrm{trans}} \approx (4/3) \estar \radius^3 \curvature$. If probing occurs across this transition, the measured incremental compliance $m_j = \Delta \indentation_j / \Delta \force_j$ should decrease as the force level increases.

\begin{proposition}[Estimation of E and $\curvature$ from Varying Compliance]\label{prop:e_kappa_variation}
Consider a series of $N \ge 3$ probes ($j=1, \dots, N$) applied at the same nominal location with increasing forces $\force_j$. Assume the conditions of Proposition~\ref{prop:youngs_modulus} regarding normal displacement hold for each increment. Let $\nu$ and $\radius$ be known. Assume the lowest forces ($\force_1, \force_2$) correspond to a predominantly Hertzian regime, and the highest forces ($\force_{N-1}, \force_N$) correspond to the flat-punch regime.

\textbf{Procedure:}
\begin{enumerate}
    \item Calculate incremental forces $\Delta \force_j = \force_{j+1} - \force_j$ and displacements $\Delta \indentation_j = \norm{\point_{j+1} - \point_j}$ for $j=1, \dots, N-1$.
    \item Calculate the incremental compliance for each interval: $m_j = \Delta \indentation_j / \Delta \force_j$.
    \item Estimate the compliance in the high-force (flat-punch) regime, $m_{\mathrm{high}}$. This could be $m_{N-1}$ or an average over several high-force intervals if $N$ is large.
    \item Estimate the plane strain modulus $\estar$ using the high-force compliance:
        \begin{equation*}
            \hat{\estar} = \frac{1}{2 \radius m_{\mathrm{high}}}.
        \end{equation*}
        Estimate Young's modulus as $\hat{\youngs} = \hat{\estar}(1 - \poisson^2)$.
    \item Estimate the compliance in the low-force (Hertzian) regime, $m_{\mathrm{low}}$ (e.g., $m_1$). Let $\force_{\mathrm{low}}$ be the average force in this interval, $\force_{\mathrm{low}} = (\force_1 + \force_2)/2$.
    \item Estimate the Hertz constant $C$ by assuming $m_{\mathrm{low}} \approx \dd\indentation/\dd\force|_{\force_{\mathrm{low}}}$ from the Hertzian model:
        \begin{equation*}
            \hat{C} \approx m_{\mathrm{low}} \frac{3}{2} \force_{\mathrm{low}}^{1/3}.
        \end{equation*}
    \item Estimate the curvature $\curvature$ using the Hertz relation $\hat{C}^3 = 9 \hat{\curvature} / (16 \hat{\estar}^2)$ and the previously estimated $\hat{\estar}$:
        \begin{equation*}
            \hat{\curvature} = \frac{16 \hat{C}^3 \hat{\estar}^2}{9}.
        \end{equation*}
        Substituting the expressions for $\hat{C}$ and $\hat{\estar}$:
        \begin{equation*}
        \begin{split}
        	\hat{\curvature} &\approx \frac{16}{9} \left( m_{\mathrm{low}} \frac{3}{2} \force_{\mathrm{low}}^{1/3} \right)^3 \left( \frac{1}{2 \radius m_{\mathrm{high}}} \right)^2 \\
             &= \frac{16}{9} \cdot m_{\mathrm{low}}^3 \cdot \frac{27}{8} \force_{\mathrm{low}} \cdot \frac{1}{4 \radius^2 m_{\mathrm{high}}^2},
        \end{split}
        \end{equation*}
        \begin{equation*}
             \hat{\curvature} \approx \frac{3}{2} \frac{\force_{\mathrm{low}} m_{\mathrm{low}}^3}{\radius^2 m_{\mathrm{high}}^2}.
        \end{equation*}
\end{enumerate}
\end{proposition}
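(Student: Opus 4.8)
The plan is to validate the procedure step by step, deriving the incremental compliance in each contact regime and then showing that their algebraic composition recovers $\curvature$. First I would treat the high-force flat-punch regime: differentiating the additive model $\indentation \approx \force/(2\estar\radius) + \indentation_{\mathrm{geom}}$ with respect to $\force$ gives the constant incremental compliance $\dd\indentation/\dd\force = 1/(2\estar\radius)$, in which the geometric term drops out. Identifying the measured $m_{\mathrm{high}}$ with this value and inverting yields $\hat{\estar} = 1/(2\radius m_{\mathrm{high}})$, establishing Step 4, and $\hat{\youngs} = \hat{\estar}(1-\poisson^2)$ follows immediately.

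Next I would treat the low-force Hertzian regime. Differentiating $\indentation \approx C\force^{2/3}$ gives $\dd\indentation/\dd\force = (2/3)C\force^{-1/3}$; identifying the measured $m_{\mathrm{low}}$ with this derivative evaluated at the representative force $\force_{\mathrm{low}} = (\force_1+\force_2)/2$ and solving for the Hertz constant produces $\hat{C} \approx (3/2)m_{\mathrm{low}}\force_{\mathrm{low}}^{1/3}$, which is Step 6. The final step inverts the Hertz relation $C^3 = 9\curvature/(16\estar^2)$ to obtain $\hat{\curvature} = 16\hat{C}^3\hat{\estar}^2/9$; substituting the two estimates and collecting constants---the factor $(3/2)^3 = 27/8$ from $\hat{C}^3$, the $1/4$ from $\hat{\estar}^2$, and the prefactor $16/9$, whose product is exactly $3/2$---collapses to the stated closed form $\hat{\curvature} \approx (3/2)\force_{\mathrm{low}}m_{\mathrm{low}}^3/(\radius^2 m_{\mathrm{high}}^2)$.

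The main obstacle lies not in the algebra but in rigorously linking the discrete finite-difference compliances $m_j = \Delta\indentation_j/\Delta\force_j$ to the continuous models. I would control the Hertzian identification with a mean-value argument: the mean value theorem guarantees some $\force^* \in (\force_1, \force_2)$ with $m_1 = (\dd\indentation/\dd\force)(\force^*)$, and replacing $\force^*$ by $\force_{\mathrm{low}}$ incurs an error of order $\Delta\force/\force_{\mathrm{low}}$, which vanishes for closely spaced low-force probes; this error is most pronounced in the Hertzian branch because $\indentation \sim \force^{2/3}$ is strongly nonlinear, whereas the flat-punch branch is exactly linear and its finite-difference compliance is therefore exact. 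The more delicate assumption is the clean regime separation---that $\force_1, \force_2$ lie below the transition force $\force_{\mathrm{trans}} \approx (4/3)\estar\radius^3\curvature$ while $\force_{N-1}, \force_N$ lie above it---which I would adopt as an explicit hypothesis and justify empirically by noting that a monotone decrease of the measured $m_j$ with increasing load signals that the probe sequence genuinely straddles the transition.
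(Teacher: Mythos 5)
Your proposal is correct and follows essentially the same route as the paper's own proof sketch: identify $m_{\mathrm{high}}$ with the constant flat-punch compliance $1/(2\radius\estar)$ to get $\hat{\estar}$, identify $m_{\mathrm{low}}$ with the Hertzian derivative $(2/3)C\force^{-1/3}$ evaluated at $\force_{\mathrm{low}}$ to get $\hat{C}$, and invert $C^3 = 9\curvature/(16\estar^2)$, with the constants collapsing to $3/2$ exactly as you compute. Your mean-value-theorem argument linking the finite-difference compliance $m_1$ to the continuous Hertzian derivative is a welcome refinement the paper does not spell out, but it does not change the underlying approach.
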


\begin{proofsketch}
The estimation of $\hat{\estar}$ follows directly from equating the high-force incremental compliance $m_{\mathrm{high}}$ with the theoretical constant compliance $1/(2 \radius \estar)$ from the flat-punch model.
The estimation of $\hat{C}$ relies on approximating the incremental compliance $m_{\mathrm{low}} = \Delta \indentation_1 / \Delta \force_1$ at low forces with the derivative $\dd\indentation/\dd\force$ from the Hertzian model $\indentation = C \force^{2/3}$, evaluated at the average force $\force_{\mathrm{low}}$. Specifically, $\dd\indentation/\dd\force = C (2/3) \force^{-1/3}$, which is rearranged to solve for $C$.
The final estimation of $\hat{\curvature}$ uses the definition of the Hertz constant $C = (9 \curvature / (16 \estar^2))^{1/3}$, substituting the previously estimated values $\hat{C}$ and $\hat{\estar}$ and solving for $\curvature$. The algebraic simplification yields the final expression.
\end{proofsketch}

\section{Application}

We now consider an example in which we consider the deformation of a solid sphere of adipose tissue with Young's modulus $E = 8$ kPa and Poisson's ratio $\nu = 0.45$ \cite{Wenderott2020}. The sphere is taken to have a radius of 10 cm, whereas the contact probe (flat punch) is assumed to have a radius of 1 cm. We use two force levels, $P_1 = 3$ N and $P_2 = 1.5 P_1 = 4.5$ N, to reconstruct the Young's modulus of the tissue. In addition, we incorporate zero-mean normally distributed additive noise in both the position and surface normal data with a standard deviation of $\sigma = 10^-3$. We consider random sampling across the surface, with a sample size of 500 samples, where each sample is comprised of a probing action at force $P_1$ and $P_2$.

The reconstructed cavity is shown in Figure~\ref{fig:sphere reconstruction}, showing minimal distortion even under a small sample size. The results of the Young's modulus reconstruction algorithm of Proposition~\ref{prop:youngs_modulus} are shown in Figure~\ref{fig:youngs modulus}. The mean is found to be 7.93 kPa, with a standard deviation of 1.71 kPa.

\begin{figure}[t]
	\centering
	\includegraphics[width=\linewidth]{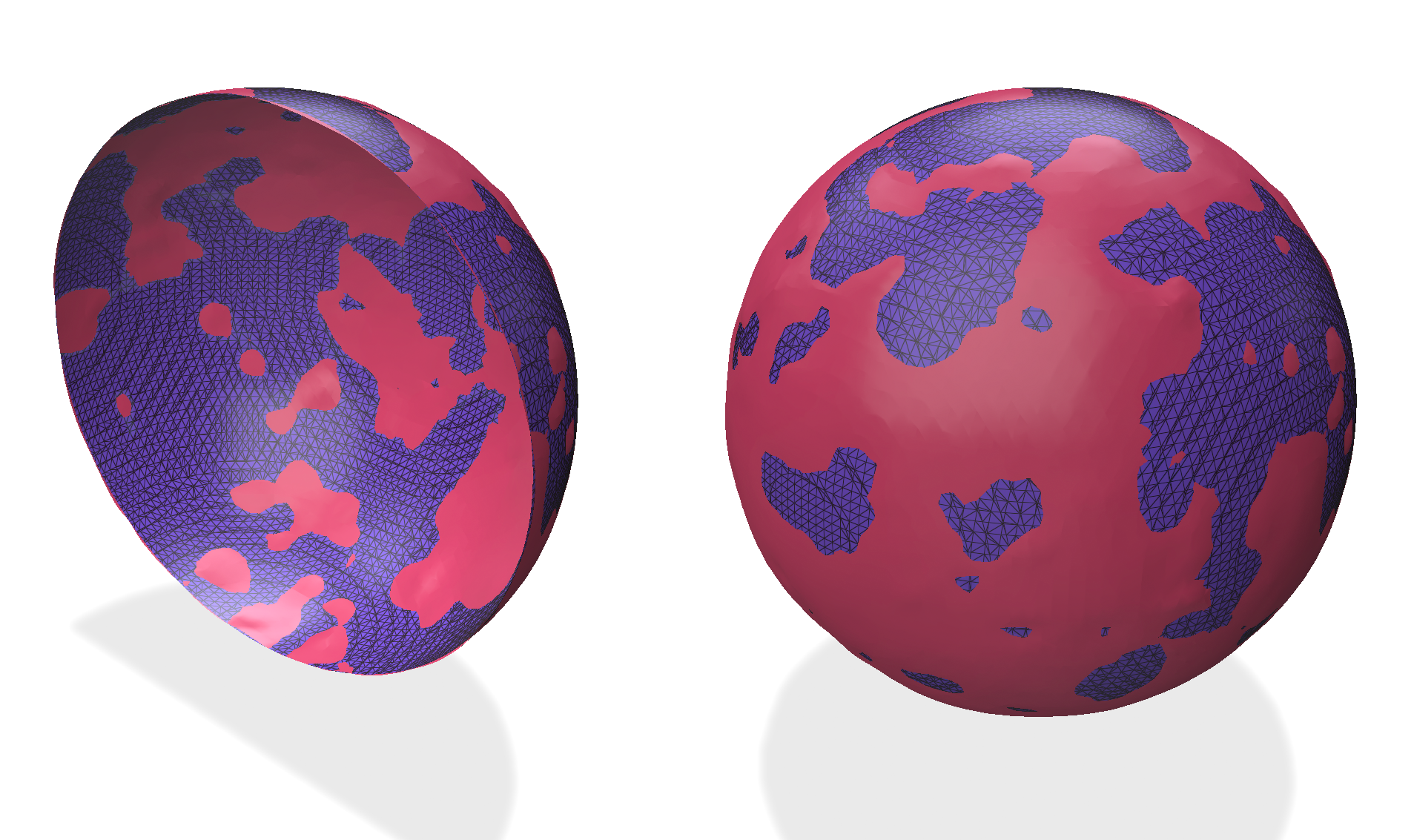}
	\caption{True (purple) and PROD-reconstructed sphere (red) based on 500 samples. Despite additive noise, the sphere is reconstructed accurately.}
	\label{fig:sphere reconstruction}
\end{figure}

\begin{figure}[t]
	\centering
	\includegraphics[width=\linewidth]{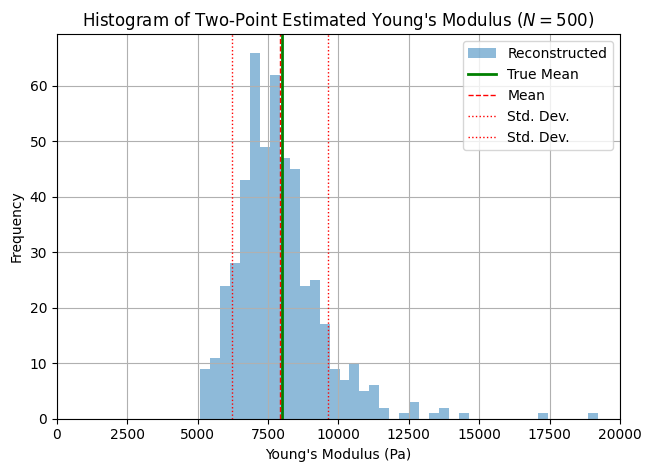}
	\caption{Young's modulus distribution using the two-sample estimation approach of Proposition~\ref{prop:youngs_modulus}. Even at small sample sizes and under noise, a correct mean is found at $7.93$ kPa.}
	\label{fig:youngs modulus}
\end{figure}

\section{Conclusion}

This work introduced PROD, a novel framework for reconstructing both the geometry and mechanical properties of deformable objects using elastodynamic signed distance functions. The proposed method demonstrated robustness in handling pose errors, non-normal force applications, and curvature inaccuracies in simulated scenarios. Outstanding future work includes validating PROD on a robotic platform with ex vivo tissue to assess real-time performance, as well as conducting sensitivity and robustness analyses to evaluate the method's applicability across diverse materials and probing conditions. These extensions will further solidify the practical viability of PROD in robotic manipulation, medical imaging, and haptic feedback systems.

%
%
%
%

\section*{Acknowledgment}

The author would like to thank Prof.~Joseph Bentsman for fruitful discussions that contributed to the realization of this work. This work was inspired by prior efforts sponsored by the National Institute of Biomedical Imaging and Bioengineering of the National Institutes of Health under award number R01EB029766. The content is solely the responsibility of the author and does not necessarily represent the official views of the National Institutes of Health.




%
\bibliographystyle{IEEEtran}
\bibliography{root}

%

%
%
%





\end{document}